\newtheorem{mythm}{Theorem}
\newtheorem{mycorr}{Corollary}
\newcommand\tb[1]{\todo[inline]{\emph{@TB: #1}}}
\def\PSIp{\hbox{$\Psi_{C}$}}
\def\PSIn{\hbox{$\Psi_{M}$}}
\def\tailsize{\tau}
\long\def\comment#1{}
\def\PSI{\hbox{$\Psi$}}
\newcommand{\argmax}{\arg\!\max}
\renewcommand\subsection{\@startsection
  {subsection}{2}{0mm}
  {-1ex}
  {0.5ex}
  {\normalfont\normalsize\itshape}}
\begin{document}
\title{The Extreme Value Machine}
\author{Ethan~M.~Rudd, Lalit~P.~Jain,\\
        Walter~J.~Scheirer,~\IEEEmembership{Senior Member,~IEEE,}\\
        and~Terrance~E.~Boult,~\IEEEmembership{IEEE Fellow}
\IEEEcompsocitemizethanks{\IEEEcompsocthanksitem E. M. Rudd, L. P. Jain, and T. E. Boult are with the Department of Computer Science, University of Colorado Colorado Springs, Colorado Springs, CO, 80918. W. J. Scheirer is with the Department of Computer Science and Engineering, University of Notre Dame, Notre Dame, IN, 46556.\protect\\
Corresponding Author's E-mail: erudd@vast.uccs.edu}
\thanks{}}

\markboth{IEEE TRANSACTIONS ON PATTERN ANALYSIS AND MACHINE INTELLIGENCE,~Vol.??, No.~??, March~2017}%
{Shell \MakeLowercase{\textit{et al.}}: Bare Demo of IEEEtran.cls for Computer Society Journals}

\IEEEtitleabstractindextext{%

\begin{abstract}
It is often desirable to be able to recognize when inputs to a recognition function learned in a supervised manner correspond to classes unseen at training time. With this ability, new class labels could be assigned to these inputs by a human operator, allowing them to be incorporated into the recognition function --- ideally under an efficient incremental update mechanism. While good algorithms that assume inputs from a fixed set of classes exist, \textit{e.g.}, artificial neural networks and kernel machines, it is not immediately obvious how to extend them to perform incremental learning in the presence of unknown query classes. Existing algorithms take little to no distributional information into account when learning recognition functions and lack a strong theoretical foundation. We address this gap by formulating a novel, theoretically sound classifier --- the Extreme Value Machine (EVM). The EVM has a well-grounded interpretation derived from statistical Extreme Value Theory (EVT), and is the first classifier to be able to perform nonlinear kernel-free variable bandwidth incremental learning. Compared to other classifiers in the same deep network derived feature space, the EVM is accurate and efficient on an established benchmark partition of the ImageNet dataset.
\end{abstract}


\begin{IEEEkeywords}
Machine Learning, Supervised Classification, Open Set Recognition, Open World Recognition, Statistical Extreme Value Theory
\vspace{2em}
\end{IEEEkeywords}}

\maketitle
\IEEEdisplaynontitleabstractindextext

\IEEEpeerreviewmaketitle
\IEEEraisesectionheading{\section{Introduction}\label{sec:introduction}}

\IEEEPARstart{R}{ecognition} problems which evolve over time require classifiers that can incorporate novel classes of data. What are the ways to approach this problem? One is to periodically retrain classifiers. However, in situations that are time or resource constrained, periodic retraining is impractical. Another possibility is an online classifier that incorporates an efficient incremental update mechanism. 
While methods have been proposed to solve the incremental learning problem, they are computationally expensive~\cite{crammer2006online,yeh2008dynamic,laskov2006incremental,li2010optimol}, or provide little to no characterization of the statistical distribution of the data~\cite{bendale2014towards,kapoor2012memory,mensink2012metric,ristin2014incremental}. The former trait is problematic because it is contrary to the fundamental motivation for using incremental learning --- that of an efficient update system ---  while the latter trait places limitations on the quality of inference.

There is also a more fundamental problem in current incremental learning strategies. When the recognition system encounters a novel class, that class should be incorporated into the learning process at subsequent increments. But in order to do so, the recognition system needs to identify novel classes in the first place. For this type of \textit{open set} problem in which unknown classes appear at query time, we cannot rely on a \textit{closed set} classifier, even if it supports incremental learning, because it implicitly assumes that all query data is well represented by the training set. 

\begin{figure}[t]\begin{center} \includegraphics[width=0.85\linewidth]{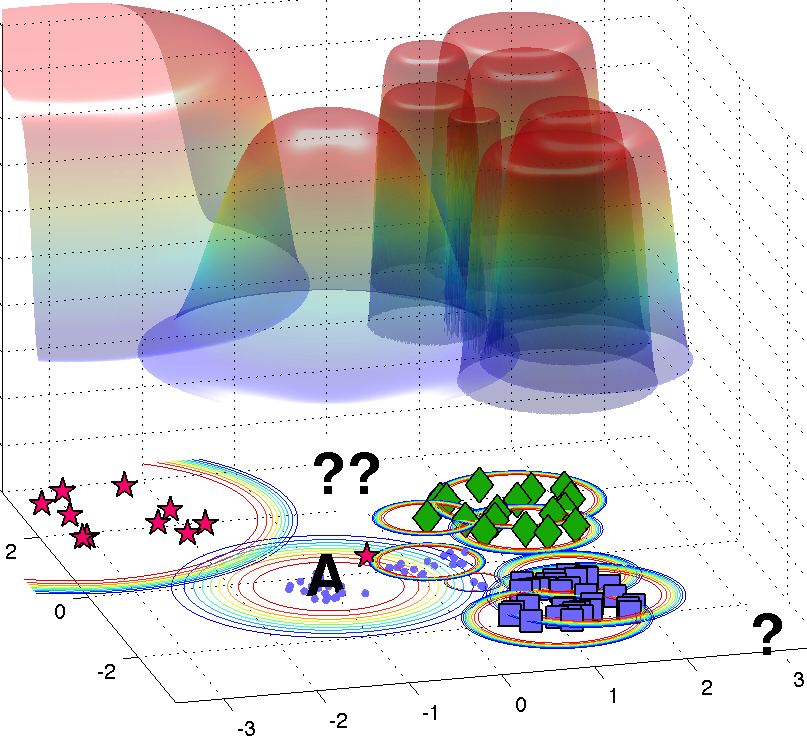}\vspace*{-2.5ex}\end{center}
\caption{\small 
A solution from the proposed EVM algorithm trained on four classes: dots, diamonds, squares, and stars.
The colors in the isocontour rings show a $\PSI$-model (probability of sample inclusion) for each \textit{extreme vector} (EV) chosen by the algorithm, with red near 1 and blue near .005. 
Via kernel-free non-linear modeling, the EVM supports open set recognition and can reject the three ``?" inputs that lie beyond the support of the training set as ``unknown.'' 
Each $\PSI$-model has its own independent shape and scale parameters learnt from the data, and supports a soft-margin. For example, the \PSI-model for the blue dots corresponding to extreme vector A has a more gradual fall off, due to the effect of the outlier star during training.
}
\label{fig:teaser}
\vspace{-2em}
\end{figure}

Closed set classifiers have been developed for approximating the Bayesian optimal posterior probability, $P(C_l |x';{\cal C}_1, {\cal C}_2, \hdots, {\cal C}_M), l \in \{1,\hdots, M\}$, for a fixed set of classes, where $x'$ is an input sample, $l$ is the index of class ${\cal C}_l$ (a particular known class), and $M$ is the number of known classes. When $\Omega$ unknown classes appear at query time, however, the Baysian optimal posterior becomes $P( C_{\tilde{l}} |x';{\cal C}_1, {\cal C}_2, \hdots, {\cal C}_M, U_{M+1}, \hdots, U_{M+\Omega}), \tilde{l} \in \{1, \hdots, M +\Omega\}$, a distribution that we cannot model because classes $U_{M+1}$ through $U_\Omega$ are unknown. Making closed set assumptions in training leads to regions of unbounded support for an open set problem because a sample $x'$ from an unknown class $U_{\tilde{l}}$ will be misclassified as a known class ${\cal C}_{l}$. For classifiers that assess confidence in terms of signed distance from a decision boundary, or some calibration thereof, this misclassification will occur with high confidence if $x'$ is far from any known data --- a result that is very misleading. Scheirer et al.~\cite{Walter_openset} termed this problem \textit{open space risk}.

More formally, let $f$ be a measurable recognition function for known class ${\cal C}$, ${\cal
  O}$ be the  open space, and $S_o$ be a ball of radius $r_o$ that includes all of the known
positive training examples $x \in {\cal C}$ as well as the open space $\cal
O$. Open space risk $R_{\cal O}(f)$ for class ${\cal C}$ can be defined as $R_{{\cal O}}(f) =  \frac{\int_{{\cal O}} f_{\cal C}(x) dx}{\int_{S_o} f_{\cal C}(x) dx} $, where open space risk is considered to be the relative measure of positively
labeled open space compared to the overall measure of positively labeled space.   In this probabilistic formulation, the objective of \textit{open set recognition} is to exercise a rejection option~\cite{bishop2006pattern} for queries that lie beyond the reasonable support of known data, thus mitigating this risk.

\comment{
Given a fixed set of classes, most closed set classifiers work well for approximating the Bayesian optimal posterior probability $P(C_l |x';{\cal C}_1, {\cal C}_2, \hdots, {\cal C}_M), l \in \{1,\hdots, M\}$, where $x'$ is an input sample, $l$ is the index of class ${\cal C}_l$ (a particular known class), and $M$ is the number of known classes. 
However, the Bayesian optimal posterior probability for an open set problem with $\Omega$ unknown classes becomes $P( C_{\tilde{l}} |x';{\cal C}_1, {\cal C}_2, \hdots, {\cal C}_M, U_{M+1}, \hdots, U_{M+\Omega}), \tilde{l} \in \{1, \hdots, M +\Omega\}$.
We cannot model this distribution because classes $U_{M+1}$ through $U_\Omega$ are unknown. A closed set assumption in classifier training leads to regions of unbounded support when an attempt is made to apply the classifier to an open set problem, \textit{i.e.}, a sample $x'$ from an unknown class $U_{\tilde{l}}$ will be classified as belonging to a known class $C_{l}$. Scheirer et al. termed this phenomenon \textit{open space risk}~\cite{Walter_openset}.
}

Open set recognition~\cite{Walter_openset,Lalit,walter2014}, and more generally novelty/outlier detection~\cite{markou2003novelty,hodge2004survey} are well established areas in their own right, but much less research has been conducted on how to treat unknown samples in an incremental context, which is the focus of this work. When an open set recognition system labels a sample as unknown, it suggests that the model was not trained with data from the class corresponding to that sample. In response, the classifier's decision boundaries should be updated so that the system can incorporate this new class information for future decision making. But there is a caveat: full retraining is not always feasible, depending on timing constraints and the availability of computational resources.

Recent work~\cite{bendale2014towards} extended the open set recognition problem to include the incremental learning of new classes in a regime dubbed \textit{open world recognition}, which is the problem we are most concerned with in this paper. An effective open-world recognition system must perform four tasks: detecting unknowns, choosing which points to label for addition to the model, labelling the points, and updating the model. 
An algorithm, nearest non-outlier (NNO), was proposed as a demonstration of these elements --- the first of its kind. Unfortunately, NNO lacks strong theoretical grounding. The algorithm uses thresholded distances from the nearest class mean as its decision function, and otherwise ignores distributional information. 
Weak classifiers are a persistent problem for this task: it is not immediately obvious how one might extend  class boundary models from classical machine learning theory (\textit{e.g.}, neural networks and kernel machines) to incorporate both incremental learning and open set constraints. A new formulation is required.

 In this article we address the construction of a compact representation of open world decision boundaries based on the distribution of the training data. Obtaining this representation is difficult because  training points that do not contribute to a decision boundary at one point in time may be extremely relevant in defining a decision boundary later on, and retraining on all points is infeasible at large scales. Moreover, by the definition of the open world problem, the hypothesis space will be under-sampled, so in many cases linearity of the decision boundaries cannot be guaranteed and the data bandwidth is unknown.
So how does one obtain a compact statistical model without discarding potentially relevant points --- especially in regions where the data bandwidth is unknown? To this end, we introduce the Extreme Value Machine (EVM), a model which we derive from statistical extreme value theory (EVT). 

EVT dictates the functional form for the radial probability of inclusion of a point with respect to the class of another. By selecting the points and distributions that best summarize each class, \textit{i.e.}, are least redundant with respect to one another, we arrive at a compact probabilistic representation of each class's decision boundary, characterized in terms of its \textit{extreme vectors} (EV), which provides an abating bound on open space risk. This is depicted in schematic form in Fig. \ref{fig:teaser}. When new data arrives, these EVs can be efficiently updated. The EVM is a scalable nonlinear classifier, with radial inclusion functions that are in some respects similar to RBF kernels, but unlike RBF kernels assume variable bandwidths and skew that are data derived and grounded in EVT.

\section{Related Work}\label{sec:relatedwork}

With respect to classifiers that mitigate open space risk at classification time, the 1-vs-Set machine~\cite{Walter_openset} approaches the problem of open set recognition by replacing the half-space of a binary linear classifier by bounding the positive data with two hyperplanes. An algorithm similar to the 1-vs-Set machine was described by Cevikalp and Triggs~\cite{cevikalp_2012} for object detection, where a binary classifier with a slab is combined with a nonlinear SVDD classifier for just the positive class. In later work, Scheirer et al. introduced the W-SVM for multi-class open set recognition problems using nonlinear kernels, with provable guarantees of open space risk reduction~\cite{walter2014}. These nonlinear models were more accurate, but also more costly to compute and store. For the more expansive problem of open world recognition, Bendale and Boult modified the Nearest Class Mean~\cite{mensink2013distance} algorithm by limiting open space risk for model combinations and transformed spaces, resulting in the NNO algorithm introduced in Sec.~\ref{sec:introduction}, which we will use as a baseline for comparison in Sec.~\ref{sec:experiments}. 

Other approaches exist for related problems involving unknown class data such as multi-class novelty detection~\cite{noveltydetection}, domain adaptation~\cite{domainadaptation}, and zero-shot classification~\cite{zero-shot}. However, these problems need not be addressed by a classifier that is incrementally updated over time with class-specific feature data. More related is the problem life-long learning, where a classifier receives tasks and is able to adapt its model in order to perform well on new task instances. Pentina and Ben-David~\cite{pentina2015multi} lay out a cogent theoretical framework for SVM-based life-long learning tasks, but leave the door open to specific implementations that embody it. Along these lines, Royer and Lampert~\cite{royer2015classifier} describe a method for classifier adaptation that is effective when inherent dependencies are present in the test data. This works for fine-grained recognition scenarios, but does not address unknown classes that are well separated in visual appearance from the known and other unknown classes.  The problem most related to our work is rare class discovery, for which Haines and Xiang have proposed an active learning method that jointly addresses the tasks of class discovery and classification~\cite{haines2014active}. We consider their classification algorithm in Sec.~\ref{sec:experiments}, even though we do not make distinctions between common and rare unknown classes.

There is growing interest in statistical extreme value theory for visual recognition. With the observation that the tails of any distance or similarity score distribution must always follow an EVT distribution~\cite{walter2011}, highly accurate probabilistic calibration models became possible, leading to strong empirical results for multi-biometric fusion~\cite{Walter_Wscore}, describable visual attributes~\cite{walter2012}, and visual inspection tasks~\cite{gibert2015sequential}. EVT models have also been applied to feature point matching, where the Rayleigh distribution was used for efficient guided sampling for homography estimation~\cite{victor2013a}, and the notion of extreme value sample consensus was used in conjunction with RANSAC for similar means~\cite{victor2013}. Work in machine learning has shown that EVT is a suitable model for open set recognition problems, where one-~\cite{Lalit} and two-sided calibration models~\cite{walter2014,zhang2016sparse} of decision boundaries lead to better generalization. 
However, these are \textit{post hoc} approaches that do not apply EVT at training time.

\section{Theoretical Foundation}
\label{sec:margin_distributions}

As discussed in Sec.~\ref{sec:introduction} and as illustrated in Fig.~\ref{fig:teaser}, each class in the EVM's training set is represented by a set of extreme vectors, where each vector is associated with a radial inclusion function modeling the \textit{Probability of Sample Inclusion} (PSI or \PSI). 
Here we derive the functional form for \PSI\ from EVT; this functional form is not just a mathematically convenient choice --- it is statistically guaranteed to be the limiting distribution of relative proximity between data points under the minor assumptions of continuity and smoothness.

The EVM formulation developed herein stems from the concept of \textit{margin distributions}. This idea is not new;  various definitions and uses of margin distributions have been explored\cite{shawe2000margin,garg2003margin,reyzin2006boosting,aiolli2008kernelo,pelckmans2008risk}, involving techniques such as maximizing the mean or median margin, taking a weighted combination margin, or optimizing the margin mean and variance. Leveraging the margin distribution itself can provide better error bounds than
those offered by a soft-margin SVM classifier, which in some cases translates
into reduced experimental error. 
We model \PSI\ in terms of the distribution of sample half-distances relative to a reference point, extending margin distribution theory from a per-class formulation~\cite{shawe2000margin,garg2003margin,reyzin2006boosting,aiolli2008kernelo,pelckmans2008risk} to a sample-wise formulation. 
The model is fit on the distribution of margins --- half distances to the nearest negative samples --- for each positive reference point. 
From this distribution, we derive a radial inclusion function which carves out a posterior probability of association with respect to the reference point. This radial inclusion function falls toward zero at the margin.  

\comment{
But the model comes with a caveat: when dimensionality is large and when classes highly overlap, modeling using \PSIn\ alone can lead to overspecialized ``spiky'' regions of support. 
While one could improve coverage by increasing the number of negative samples in fitting, this greatly increases computational cost.
}

\comment{
We therefore introduce a \emph{coverage probability of sample inclusion} (\PSIp) model, trained by fitting on positive data.
By introducing \PSIp, we seek to ensure that the local model covers (\textit{i.e.}, indicates high-probability for) other positive examples. 
While \PSIn\ is a rejection model, \PSIp\ is an acceptance model derived from a distribution fit on the positive tail of the distances to the farthest $\tailsize$ positive points that lie closer to the reference sample than the nearest negative.
Training \PSIp\ requires only one negative point for discrimination. 
Note that to increase coverage, \PSIp\ uses complete distances instead of margins for the selected positive points. 
Empirically, \PSIp\ accommodates areas with class overlap or high-dimensional features where margins for \PSIn\ may over-specialize.
}

\subsection{Probability of Sample Inclusion}


To formalize the \PSI-model, let $x \in {\cal X}$ be training samples in a feature space $\cal X$. Let $y_i \in {\cal C} \in \mathbb{N}$ be the class label for $x_i\in {\cal X}$. 
Consider, for now, only a single positive instance ${x}_i$ for some class with label $y_i$. 
Given ${x}_i$, the maximum margin distance would be given by half the distance to the closest training sample from a different class. 
However, the closest point is just one sample and we should consider the potential maximum margins under different samplings. 
We define margin distribution as the distribution of the margin distances of the observed data. 
Thus, given ${x}_i$ and $x_j$, where $\forall j$, $ y_j \ne y_i$, consider the margin distance to the decision boundary that would be estimated for the pair $({x}_i,x_j)$ if $x_j$ were the closest instance. 
The margin estimates are thus $m_{ij} = \|{x}_i - x_j\|/2$ for the $\tailsize$ closest points. 
Considering these $\tailsize$ nearest points to the margin, our question then becomes: what is the distributional form of the margin distances? 

To estimate this distribution, we turn to the Fisher-Tippett Theorem~\cite{fisher1928limiting} also known as the Extreme Value Theorem\footnote{There are other types of extreme value theorems, \textit{e.g.}, the second extreme value theorem, the Pickands-Balkema-de Haan Theorem~\cite{Pickands_1975}, addresses probabilities conditioned on the process exceeding a sufficiently high threshold.}. 
Just as the Central Limit Theorem dictates that the random variables generated from certain stochastic processes follow Gaussian distributions, EVT dictates that given a well-behaved overall distribution of values, \textit{e.g.}, a distribution that is continuous and has an inverse, the distribution of the maximum or minimum values can assume only limited forms. 
To find the appropriate form, let us first recall:
\begin{mythm}[Fisher-Tippett Theorem~\cite{Kotz_2001}]\label{EVTtheorem}
Let $(v_1, v_2, \ldots)$ be a sequence of i.i.d samples. Let $\zeta_n = \mathrm{max}\{v_1, \ldots, v_n\}.$ If a sequence of pairs of real numbers $(a_n, b_n)$ exists such that each $a_n > 0$ and $
\lim_{z\to\infty} P \big( \frac{\zeta_n - b_n}{a_n} \le z \big) = F(z)
$ then if $F$ is a non-degenerate distribution function, it belongs to the Gumbel, the Fr\'{e}chet or the Reversed Weibull family.
\end{mythm}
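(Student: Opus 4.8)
The plan is to show that any admissible limit $F$ must be \emph{max-stable}, and then to solve the resulting functional equation, whose only non-degenerate solutions are the three named families. Throughout, write $F_v$ for the common distribution function of the $v_i$, so that by independence $P(\zeta_n \le z) = F_v(z)^n$, and since $a_n > 0$ the hypothesis reads $F_v(a_n z + b_n)^n \to F(z)$ at every continuity point of $F$.

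First I would establish max-stability. Fix an integer $k \ge 1$ and run the hypothesis along the subsequence $nk$: on one hand $F_v(a_{nk} z + b_{nk})^{nk} \to F(z)$, while on the other hand $F_v(a_n z + b_n)^{nk} = \left( F_v(a_n z + b_n)^{n} \right)^{k} \to F(z)^{k}$. Thus the same block-maxima, normalized by two different affine sequences, converge to the non-degenerate limits $F$ and $F^{k}$ respectively. The decisive tool here is the \emph{convergence-to-types theorem} (Khinchin's lemma): if two affine rescalings of one sequence both converge to non-degenerate limits, then the rescalings themselves converge and the limits differ only by an affine change of variable. Applying it yields constants $\alpha_k > 0$ and $\beta_k$ with
\[
F(z)^{k} = F(\alpha_k z + \beta_k) \qquad \text{for every integer } k \ge 1,
\]
which is precisely the max-stability identity.

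Next I would extract the form of the normalizing constants. Computing $F^{kl}$ two ways from the identity above forces the multiplicative relation $\alpha_{kl} = \alpha_k \alpha_l$ together with $\beta_{kl} = \alpha_k \beta_l + \beta_k$. Because $F$ is monotone and right-continuous, the map $k \mapsto \alpha_k$ inherits enough regularity to exclude the pathological (non-measurable) solutions of this Cauchy-type equation, so $\alpha_k = k^{p}$ for a single real exponent $p$, with $\beta_k$ pinned down correspondingly. This splits the analysis into the case $p = 0$ (no scaling, $\alpha_k \equiv 1$, whence $\beta_k = c\log k$) and the case $p \ne 0$ (a genuine scaling possessing a common fixed point).

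Finally I would solve each reduced equation. When $p = 0$ the identity becomes $F(z)^{k} = F(z + c\log k)$; setting $G = -\log F$ turns this into $k\,G(z) = G(z + c\log k)$, whose monotone solutions are $G(z) = \exp(-(z-b)/a)$, i.e. the \emph{Gumbel} law $F(z) = \exp(-e^{-(z-b)/a})$. When $p \ne 0$ I would translate coordinates to the common fixed point, so the identity reduces to $F(z)^{k} = F(\alpha_k z)$; separating according to whether the support is bounded below or above (equivalently the sign of $p$) yields the \emph{Fr\'echet} and \emph{Reversed Weibull} laws. The main obstacle is the convergence-to-types step: proving it rigorously, and then using only the monotonicity and right-continuity of $F$ to rule out the wild solutions of the functional equations, is where all the real work lies, since once $F^{k} = F(\alpha_k z + \beta_k)$ and the form $\alpha_k = k^{p}$ are in hand, the three families drop out by direct integration.
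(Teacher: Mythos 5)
The paper does not prove this statement at all: Theorem~\ref{EVTtheorem} is quoted as a classical result (Fisher--Tippett/Gnedenko, as presented in Kotz and Nadarajah) and is used as a black box from which the Margin Distribution Theorem is then derived. So there is no in-paper proof to compare against; what you have written is the standard textbook derivation, and in outline it is correct. Your route --- passing from the hypothesis $F_v(a_n z + b_n)^n \to F(z)$ along the subsequence $nk$, invoking the convergence-to-types theorem to obtain the max-stability identity $F^k(z) = F(\alpha_k z + \beta_k)$, extracting $\alpha_{kl} = \alpha_k\alpha_l$ and $\beta_{kl} = \alpha_k\beta_l + \beta_k$, and then solving the functional equation in the cases $\alpha \equiv 1$ versus $\alpha \not\equiv 1$ --- is exactly Gnedenko's argument, and the three families do drop out as you describe. (Incidentally, you have silently corrected the paper's typo: the limit in the statement should be $n \to \infty$, not $z \to \infty$.)

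One step deserves more care than your sketch gives it. Working only with integer indices $k$, the relation $\alpha_{kl} = \alpha_k\alpha_l$ is a completely multiplicative function on $\mathbb{N}$, and such functions are \emph{not} forced to be of the form $k^p$ by measurability alone --- their values on distinct primes can be chosen freely. You need either (i) to verify that $k \mapsto \alpha_k$ is monotone, which does follow from $F^k \ge F^{k+1}$ together with the non-degeneracy of $F$, or (ii) the cleaner standard fix: run the hypothesis along $\lfloor ns \rfloor$ for arbitrary real $s \ge 1$ to get $F^s(z) = F(\alpha_s z + \beta_s)$ for a continuum of exponents, after which the Hamel/Cauchy equation on $(0,\infty)$ with monotonicity gives $\alpha_s = s^{p}$ honestly. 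With that repair the rest of your outline (Gumbel when $p=0$ via $G = -\log F$, Fr\'echet and Reversed Weibull from the fixed-point form $F^k(z) = F(\alpha_k z)$ according to whether the support is bounded below or above) is sound.
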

In other words for any  sequence $(a_n, b_n)$ of shifts and normalizations of the samples such that the probability of the maximum value converges, it converges to one of three distributions\footnote{A more thorough introductory overview of EVT can be found in \cite{coles2001introduction}.}. 
From Theorem~\ref{EVTtheorem}, we can derive the following:
\comment{This extreme value theorem is widely used in many fields \cite{Kotz_2001}, such as manufacturing (e.g., estimating time to failure), natural sciences (e.g., estimating 100 or 500 year flood levels), and finance (e.g., portfolio risks). EVT has recently been (re)introduced as applying to recognition,  machine learning, and computer vision\cite{walter2011,walter2014,carpentier2014extreme}. 
Here we  use it to formalize the per-sample input  margin distribution:}
\begin{mythm}[Margin Distribution Theorem]
\label{max_margin_distribution}
Assume we are given a positive sample $x_i$ and sufficiently many negative samples $x_{j}$ drawn from well-defined class distributions, yielding pairwise margin estimates $m_{ij}$. 
Assume a continuous non-degenerate margin distribution exists. 
Then the distribution for the minimal values of the margin distance for $x_i$ is given by a Weibull distribution. 
\end{mythm}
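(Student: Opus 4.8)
The plan is to reduce the statement to the Fisher--Tippett Theorem (Theorem~\ref{EVTtheorem}) by converting the problem about the \emph{minimum} margin into one about a \emph{maximum}. First I would fix the positive reference $x_i$ and regard the negative samples $x_j$ as i.i.d.\ draws from the (continuous, well-defined) negative class distribution. Since each margin estimate $m_{ij} = \|x_i - x_j\|/2$ is a fixed measurable function of the single random draw $x_j$, the family $\{m_{ij}\}_j$ is itself a sequence of i.i.d.\ random variables, and the assumed continuity and non-degeneracy of the margin distribution transfer to it, so the hypotheses of Theorem~\ref{EVTtheorem} are in force. The decision boundary is governed by the \emph{closest} negatives, so the object of interest is $\min_j m_{ij}$ rather than a maximum; I would therefore set $v_j = -m_{ij}$ and use the elementary identity $\min_j m_{ij} = -\max_j v_j$.

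Applying Theorem~\ref{EVTtheorem} to the i.i.d.\ sequence $\{v_j\}$, the suitably normalized maximum $\max_j v_j$ converges (non-degenerately) to one of the three extreme value families: Gumbel, Fr\'{e}chet, or Reversed Weibull. The crucial structural fact is that $m_{ij} = \|x_i - x_j\|/2 \ge 0$, so $v_j = -m_{ij} \le 0$; equivalently, the distribution of $v_j$ has a finite upper endpoint (at most $0$) and the margin distribution has a finite lower endpoint at the origin. Among the three families, the Fr\'{e}chet arises only from distributions that are unbounded above with a heavy polynomial tail, so the finite upper endpoint of $v_j$ immediately excludes it, leaving the Gumbel and the Reversed Weibull as the only candidates.

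The step that actually pins down the Weibull, and which I expect to be the main obstacle, is excluding the Gumbel. A finite upper endpoint alone does \emph{not} force the Reversed Weibull: distributions that approach their endpoint faster than any power law still lie in the Gumbel max-domain of attraction. To rule this out I would invoke the smoothness hypothesis on the underlying class distributions. If the negative density is continuous and bounded near $x_i$, then the probability that a negative sample lands within distance $r$ of $x_i$ scales like the volume of a ball, i.e.\ $\sim c\, r^{d}$ as $r \to 0$, so the margin CDF approaches its lower endpoint at a power-law rate. This polynomial approach to the finite endpoint is exactly the regularity condition placing the distribution in the Reversed Weibull (rather than Gumbel) domain of attraction, so $\max_j v_j$ is Reversed Weibull.

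Finally, transforming back through $\min_j m_{ij} = -\max_j v_j$, I would use the fact that the negative of a Reversed Weibull variable (supported to the left of its finite right endpoint) is precisely a Weibull variable (supported to the right of the corresponding left endpoint, here the origin), which gives the claimed form. Beyond the Gumbel-versus-Weibull distinction, the remaining technical care is verifying that the continuity and non-degeneracy assumptions genuinely preclude a degenerate limit, so that Theorem~\ref{EVTtheorem} applies with a non-constant $F$; this is routine compared to establishing the power-law endpoint behavior, which is where the real content of the argument lies.
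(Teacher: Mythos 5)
Your proposal follows essentially the same route as the paper's proof: negate the margins so that the minimum becomes a maximum, invoke the Fisher--Tippett Theorem, use the boundedness $-m_{ij} \le 0$ to select the reversed Weibull family, and transform back. The one place you diverge is also the place where you are \emph{more} careful than the paper. The paper's proof disposes of the family-selection step in a single clause --- it asserts that since the transformed data are bounded above, the limit ``converges to a reversed Weibull, as it is the EVT distribution that is bounded from above.'' As you correctly observe, that inference is not airtight: a finite upper endpoint rules out the Fr\'{e}chet family but not the Gumbel family, since distributions approaching a finite endpoint faster than any power law still lie in the Gumbel max-domain of attraction. Your additional argument --- that a continuous, bounded negative density near $x_i$ makes the margin CDF approach its lower endpoint at the power-law rate $\sim c\,r^{d}$, which is exactly the von Mises--type condition for the reversed Weibull domain of attraction --- supplies the missing justification. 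So your proof is not a different approach but a strictly tighter execution of the same one; the paper leans on the informal ``bounded $\Rightarrow$ reversed Weibull'' heuristic, while you identify and close the Gumbel loophole using the smoothness assumption that the theorem statement already provides.
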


\begin{proof} Since Theorem~\ref{EVTtheorem} applies to maxima, we transform the variables via $z = -m_{ij}$ and consider the maximum set of values $-m_{ij}$. The assumption of sufficient samples and a well-defined set of margin distances converging to a non-degenerate margin implies that Theorem~\ref{EVTtheorem} applies.  Let $\phi$ be the associated distribution of the maxima of $\zeta_n$. Combining  Theorem~\ref{EVTtheorem} with knowledge that  the data are bounded ($-m_{ij}<0$) means that $\phi$ converges to a reversed Weibull, as it is the EVT distribution that is bounded from above~\cite{Kotz_2001}. Changing the variables back ($m_{ij}=-z$) means that the minimum distance to the boundary must be a Weibull distribution.\end{proof}

Theorem 2 holds for any point $x_i$, with each point estimating its own distribution of distance to the margin yielding:

\begin{mycorr}[\PSI\ Density Function]
\label{margin_distribution_kernel}
Given the conditions for the Margin Distribution Theorem, the probability that $x^\prime$ is included in the boundary estimated by $x_i$ is given by:
\begin{equation}
\PSI(x_i,x^\prime;\kappa_i, \lambda_i,) = \exp^{-\left(\frac{||x_i-x^\prime||}{\lambda_i}\right)^{\kappa_i}}
\label{eq:weibull_cdf}
\end{equation}
where $||x_i-x^\prime||$ is the distance of $x^\prime$ from sample $x_i$, and $\kappa_i$, $\lambda_i$ are Weibull shape and scale parameters respectively obtained from fitting to the smallest $m_{ij}$. 
 \end{mycorr}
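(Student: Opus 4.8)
The plan is to read the inclusion function directly off the Weibull form guaranteed by Theorem~\ref{max_margin_distribution}, interpreting \PSI\ as a survival (complementary cumulative) probability rather than as a CDF. First I would recall that Theorem~\ref{max_margin_distribution} already establishes that the margin distance $M_i$ associated with the reference point $x_i$ --- the half-distance to the nearest negative, whose smallest observed realizations are the $m_{ij}$ --- converges to a Weibull random variable with shape $\kappa_i$ and scale $\lambda_i$, where these parameters are exactly those obtained by fitting to the smallest $m_{ij}$. The cumulative distribution function of such a Weibull, supported on $[0,\infty)$, is $F(m) = 1 - \exp^{-(m/\lambda_i)^{\kappa_i}}$, so the entire content of the corollary is to convert this into a statement about a query $x'$.

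Next I would fix the meaning of ``inclusion'' geometrically. A query $x'$ lying at radial distance $d = \|x_i - x'\|$ from the reference point is included in $x_i$'s estimated region of support precisely when that distance does not exceed the (random) margin, i.e. when $M_i \ge d$; once $x'$ reaches or passes the estimated boundary it is no longer associated with $x_i$. Because the $m_{ij}$ are already recorded as half-distances, the margin $M_i$ lives on the same radial scale as $d$ and may be compared to it without any extra factor, so the probability of inclusion is the survival function $P(M_i \ge d) = 1 - F(d) = \exp^{-(d/\lambda_i)^{\kappa_i}}$, which is exactly Eq.~\eqref{eq:weibull_cdf}.

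To finish, I would verify that this expression behaves as an inclusion probability should: it equals $1$ at $d = 0$ (the reference sample is certainly associated with itself), it decreases monotonically in $d$, and it tends to $0$ as $d \to \infty$, so that \PSI\ falls toward zero at the margin as claimed in the surrounding text. Since Theorem~\ref{max_margin_distribution} holds for an arbitrary $x_i$, each with its own fitted $\kappa_i$ and $\lambda_i$, the same argument applies pointwise and yields an independent \PSI-model per reference sample, matching the per-extreme-vector picture of Fig.~\ref{fig:teaser}.

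I expect the main obstacle to be conceptual rather than computational: justifying the second step --- that the meaningful quantity is the complementary CDF $P(M_i \ge d)$ and not the CDF itself --- while keeping the half-distance bookkeeping consistent, since the Weibull parameters are fit on the half-distances $m_{ij}$ but \PSI\ is evaluated at the full radial distance $\|x_i - x'\|$. Getting both the direction of the inequality and this scaling convention right is precisely what makes the survival-function reading, rather than a naive substitution into $F$, the statistically correct choice.
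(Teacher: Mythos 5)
Your proposal is correct and follows essentially the same route as the paper: the paper's proof likewise observes that the Weibull CDF $F_{W}(\|x'-x_i\|;\kappa_i,\lambda_i)$ gives the probability that the margin is at or below a given value, and takes the complement $1 - F_{W}$ to obtain the probability that $x'$ does not exceed the margin, which is exactly your survival-function reading. Your added sanity checks on boundary behavior and the remark on the half-distance versus full-distance bookkeeping are reasonable elaborations, but the core step is identical.
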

\begin{proof} 
The Weibull cumulative distribution function (CDF) $F_{W}(||x'-x_i||;\kappa_i,\lambda_i)$ provides the probability that the margin is at or below a given value, but we seek the probability that $x^\prime$ does not exceed the margin yielding the inverse: $1 - F_{W}(||x'-x_i||;\kappa_i,\lambda_i)$ (cf. Eq.~\ref{eq:weibull_cdf}).\end{proof}

The \PSI-model defines a radial inclusion function that is an EVT rejection model where the probability of inclusion corresponds to the probability that the sample \textit{does not} lie well into or beyond the negative margin. While \PSI\ is designed to have zero probability around the margin, half-way to the negative data, the model still supports a soft margin because the EVT estimation uses $\tailsize$ points and hence may cover space with both positive and negative points. However, this does not force the model to include {\em any} positive training samples within its probability of inclusion. 

\comment{
\subsection{Coverage Probability of Sample Inclusion}
To incorporate positive support into our formulation, the \PSIp~model accepts based on positive data, rather than being a rejection model based on negative data in areas of small negative support.
This model takes a similar functional form to that of \PSIn, so we can fuse \PSIn~and \PSIp~under a single decision function.
Consider the $\tailsize$-largest distances to positive points that are within the radius of the nearest negative point --- fitting on these values forces an upper bound on a positive tail which allows us to re-invoke Theorem~\ref{EVTtheorem}.

\begin{mythm}[Coverage Distribution Theorem]
\label{thm:cover_distribution}
Assume we are given positive sample $x_i$, nearest negative sample $x_k$, and sufficiently many positive samples $x_j$, with pairwise unique distances $c_{ij} = ||x_j - x_i|| \leq ||x_k - x_i||, \forall j$. Then the distribution of largest $c_{ij}$ will follow a reversed Weibull distribution.
\end{mythm}
\begin{proof}
Distances $c_{ij}$ are bounded from above by $||x_k - x_i||$. This fact, along with the assumption that the largest of sufficiently many distance samples are under consideration guarantee the existence of a limiting sequence $(a_n,b_n)$ in Theorem~\ref{EVTtheorem}. $||x_k - x_i||$ serves as an upper end point on the distribution, yielding a reversed Weibull distribution.
\end{proof}

From Theorem~\ref{thm:cover_distribution}, we can derive \PSIp, which is similar in functional form to \PSIn:

\begin{mycorr}[\PSIp\ Density Function]
\label{cover_distribution_kernel}
Given the conditions for the Coverage Distribution Theorem, the probability that $x^\prime$ is included in the boundary estimated by $x_i$ is given by:
\begin{equation}
\PSIp(x_i,x^\prime;\kappa'_i, \lambda'_i) = 1 - \exp^{ -\left(\frac{||x_i-x^\prime||}{\lambda'_i} \right)^{\kappa'_i}}
\label{eq:weibull_cdf_2}
\end{equation}
where $||x_i-x^\prime||$ is the distance of $x^\prime$ from sample $x_i$, and where $\kappa'_i$, $\lambda'_i$ are reversed Weibull shape and scale parameters respectively obtained from fitting to the largest $c_{ij}$. 
 \end{mycorr}
\begin{proof} 
Theorem~\ref{thm:cover_distribution} states that a reversed Weibull CDF $F_{R}(||x'-x_i||;\kappa_i',\lambda_i')$, with shape and scale parameters $\kappa'$ and $\lambda'$ obtained by fitting on the distances to farthest positives from $x_i$ nearer than the nearest negative, will yield the probability that a positive point will have been seen at distance $d \leq ||x'-x_i||$, given that it is closer than the nearest negative point\footnote{When fewer than $\tau$ positive points are closer than the nearest negative, we simply fit on the nearest positive points. Another way to handle this would be to adjust $\tau$. We chose the former method for implementation simplicity.}. 
However, we wish to obtain the probability that a point from the same class as $x_i$ has not been seen within distance $d$, and thus we need to ``cover'' it. This probability is equivalent to $1 - F_{R}(||x'-x_i||;\kappa_i',\lambda_i')$ (cf. Eq.~\ref{eq:weibull_cdf_2}).
\end{proof}


%
}
\subsection{Decision Function}

\comment{
We have two models for a given point: \PSIn\ and \PSIp, but how do we decide which one to use or how to fuse them? 
Cross-validation is nontrivial for two reasons: First, it introduces additional computation, thus hindering scalability. 
Second, it does not let us reason beyond the error of the validation set, which gives little indication that the model will generalize in the presence of unknown classes or when classes are incrementally added later on. Fortunately, because \PSIn\ and \PSIp\ have similar functional forms, we can easily fuse them. Since a reversed Weibull PDF is a reflection of a Weibull PDF at the same scale and shape, we can decide whether to use \PSIn\ or \PSIp\ via a comparison of the respective scale parameters.

We let the \textit{overall probability of sample inclusion} be:
\begin{equation}
 \hat\PSI(x_i,x'; \hat\kappa_i, \hat\lambda_i) =\begin{cases} \PSIn(x_i,x;\kappa_i, \lambda_i) & \hbox{if } \lambda^\prime_i < \lambda_i \\
 \PSIp(x_i,x';\kappa'_i, \lambda'_i) & \hbox{otherwise.} \\
 \end{cases}
\end{equation}
Taking the model with larger scale gives us a principled choice of \PSIp\ or \PSIn\ in the following respect: \PSIp\ only defines the distribution of points closer than the nearest negative, so we expect its scale to be small. If \PSIn\ predominantly falls in the space governed by this distribution, then \PSIn\ is spiky, suggesting poor coverage of positive data. In this case, we leave the classification decision to \PSIp. Otherwise, we trust the support of the learnt margin distribution.
}

The probability that a query point $x'$ is associated with class ${\cal C}_l$  is  $\hat P({\cal C}_l|x') = \argmax_{\{i: y_i = {\cal C}_l\}} \PSI\ (x_i,x'; \kappa_i, \lambda_i)$. 
Given $\hat P$, we compute the open set multi-class recognition result for $x^\prime$. Let threshold $\delta$ on the probability define the boundary between the set of known classes ${\cal C}$ and unsupported open space~\cite{Walter_openset} so that the final classification decision is given by:
\begin{equation}
y^* = \begin{cases}\argmax_{l \in \{1,...,M\}} \hat P({\cal C}_l|x^\prime) 
 & \hbox{if } \hat P({\cal C}_l|x^\prime)  \ge \delta \\
\hbox{``unknown"} &  \hbox{Otherwise.}  
\end{cases}
\label{eq:thresholdprob}
\end{equation}

A principled approach to selecting $\delta$ is to optimize the tradeoff between closed set accuracy and rejection of unknown classes via \textit{cross-class validation} \cite{bendale2014towards} on the training set, leaving out a subset of classes as ``unknown'' at each fold.
A slight generalization of the decision function in Eq.~\ref{eq:thresholdprob} is to average over the $k$-largest probabilities for each class.
For all experiments in Sec.~\ref{sec:experiments} we selected a $k$ value via a hyperparameter search on non-test data, choosing $k$ from $\{1,\hdots,10\}$.
In practice, we found that a choice of $k > 1$ yields only slight performance gains of 1-2\% in accuracy.  


\section{EVM Formulation}
\label{sec:evmformulation}

With the \PSI-models, we can develop an algorithm that is not only advantageous for open world recognition, but is also useful for limiting trained model size and obtaining favorable scaling characteristics. The pseudocode for this algorithm is provided in the supplemental material for this paper. Corresponding source code will be made available after publication.

\subsection{Model Reduction}
\label{sec:model_reduction}

Keeping all \PSI-models and associated data points results in larger models and longer classification times as dataset sizes increase, which is undesirable in both incremental and resource constrained scenarios. The success of sparse classification algorithms in other problem domains (\textit{e.g.}, SVM) suggests that we can strategically discard many redundant $\langle x_i,\Psi(x_i,x',\kappa_i,\lambda_i) \rangle$ pairs within a class $C_l$ of $N_l$ training points with minimal degradation in classification performance. 
Intuitively, if many points that characterize the class in question are sufficiently close to one another compared to points from negative classes, then we expect redundancy in their \PSI~responses. By thresholding on a minimum redundancy probability, we can select a subset of points that characterize the class.
While many strategies can be used for this selection, we wish to select the minimum number of points required to cover the class. 
We can formulate this strategy as a minimization problem: Let $x_i$ be a point in the class of interest and $\Psi(x_i,x',\kappa_i, \lambda_i)$ be its corresponding model. Without loss of generality, let $x_j$ be another point in the same class with model $\Psi(x_j,x',\kappa_j, \lambda_j)$. Let $\varsigma$ be the probability threshold above which to designate redundancy of the pair $\langle x_j,\Psi(x_j,x',\kappa_j, \lambda_j) \rangle$ with respect to $\langle x_i,\Psi(x_i,x',\kappa_i, \lambda_i) \rangle$, such that if $\Psi_i(x_i,x_j,\kappa_i,\lambda_i) \geq \varsigma$ then $\langle x_j,\Psi(x_j,x',\kappa_j, \lambda_j) \rangle$ is redundant with respect to $\langle x_i,\Psi(x_i,x',\kappa_i, \lambda_i) \rangle$. Finally, let $I(\cdot)$ be an indicator function such that 
\begin{alignat}{2}
 \begin{cases} 
   I(x_i) = 1 \text{ if } \langle x_i,\Psi(x_i,x',\kappa_i,\lambda_i) \rangle \text{ kept}\\
   I(x_i) = 0 \text{ otherwise}.
    \end{cases}
\end{alignat} 
If $x_i$ and $\Psi(x_i,x',\kappa_i,\lambda_i)$ are retained, they become \textit{extreme vectors} defining the final model. We can then express our optimization strategy in terms of this objective function:
\begin{alignat}{2}
  & \text{minimize } \sum_{i=1}^{N_l} I(x_i)\label{eqn:minimization} \text{ subject to}\\
  & \forall j \exists i | I(x_i) \Psi(x_i,x_j,\kappa_i,\lambda_i) \geq \varsigma.\label{eq:constraint}
\end{alignat}

The constraint (Eq.~\ref{eq:constraint}) requires that every $\langle x_i, \Psi(x_i,x', \kappa_i,  \lambda_i) \rangle$ pair be an EV or be covered by at least one other pair. Note that the implicit binary constraint in the range of $I(\cdot)$ makes the optimization an integer linear programming problem. The formulation in Eqs.~\ref{eqn:minimization} and~\ref{eq:constraint} is a special case of Karp's Set Cover problem. We can see this by defining a coverage set of indices $s_i \equiv \{ j \in \{1,..,N_l\} |  \Psi_i(x_i,x_j, \kappa_i, \lambda_i) \geq \varsigma\}$ for each $\langle x_i, \Psi_i(x_i,x', \kappa_i, \lambda_i) \rangle$ pair and a universe $U=\{1,..,N_l\}$. The objective of the Set Cover problem is then to select the minimum number of sets that contains all elements of $U$. 
While Set Cover is NP-hard, we employ the greedy approximation described in \cite{slavik1996tight} that offers a polynomial time $(1+\ln(N_l))$ approximate solution (\textit{cf}. Theorem 2 in~\cite{slavik1996tight}). This algorithm offers the smallest error bound for any polynomial time approximation. The greedy approximation entails selecting the sets of highest cardinality at each step. The upper bound in approximation error is $(1-o(1))\cdot\mathrm{ln}(N_l)$, where $N_l$ is the cardinality of the universe of set elements.

 Note that with the model fitting, an outlier is generally covered by a point from another class (see Fig.~\ref{fig:teaser}), and such outlier points are also unlikely to cover many other points. Thus, outliers are added to the coverage set very late, if at all. This is not an \textit{ad hoc} assumption; it is an outcome of the process of minimizing the number of points that cover all examples. Like the inherent softness of the margin, this is an inherent part of the model-reduction approach that follows from the EVT-modeling. 

\subsection{Incremental Learning}

Once EVs have been acquired for the EVM, the model can be updated with a new batch of data by fitting \PSI-models for the new data using both the current EVs and all points in the new batch of data. The new EVs are obtained by running model reduction over both the old EVs and the new training points. While new points can be added individually, adding data in batches will result in more meaningful fits because batches represent a richer distributional sample at each increment.
This means that newly added training points may or may not become EVs and new classes can also impact previously learned models and EVs. 
The efficient model reduction technique discussed in Sec.~\ref{sec:model_reduction} allows the EVM to limit the size of its models either probabilistically via an explicit selection of $\varsigma$ or by a specific maximum number of EVs in a max-$k$ cover greedy approach~\cite{mirzasoleiman2013distributed}. 
This allows the EVM to scale to many different incremental problems via different modes of operation. 
In Sec.~\ref{sec:imagenet}, for example, we choose a static $\varsigma$ and perform model reduction using this threshold at each training increment for classes to which data get added.
While this limits the growth in model size, the number of EVs still increases over time. 
Bounded optimization would specify a maximum per-class size or maximum total size, recalculating $\varsigma$ at each increment. 
Alternatively, maximum model sizes could be pre-specified with model reduction performed only when the maximum size is violated. 
Thus, the EVM is not only an incremental classifier, but it is an incremental classifier whose size can be controlled at each learning increment.



%
%
%

\section{Experimental Evaluation}
\label{sec:experiments}

We conducted several evaluations in which we compared the EVM to other open set and open world classifiers on published benchmarks, including the state-of-the-art for both problems. 
To ensure valid comparisons and consistency with previous research where applicable, we report the results of these evaluations using the same evaluation measures and thresholds that were used in the original benchmark settings. 
For all of the experiments in this section we selected a $\varsigma$ value of 0.5 based on the probabilistic assumption that points with greater than 50\% probability of being covered by others in that class are redundant with respect to the model.

\subsection{Multi-class Open Set Recognition on Letter}

\begin{figure}[t]
\begin{center}
  \includegraphics[width=\linewidth]{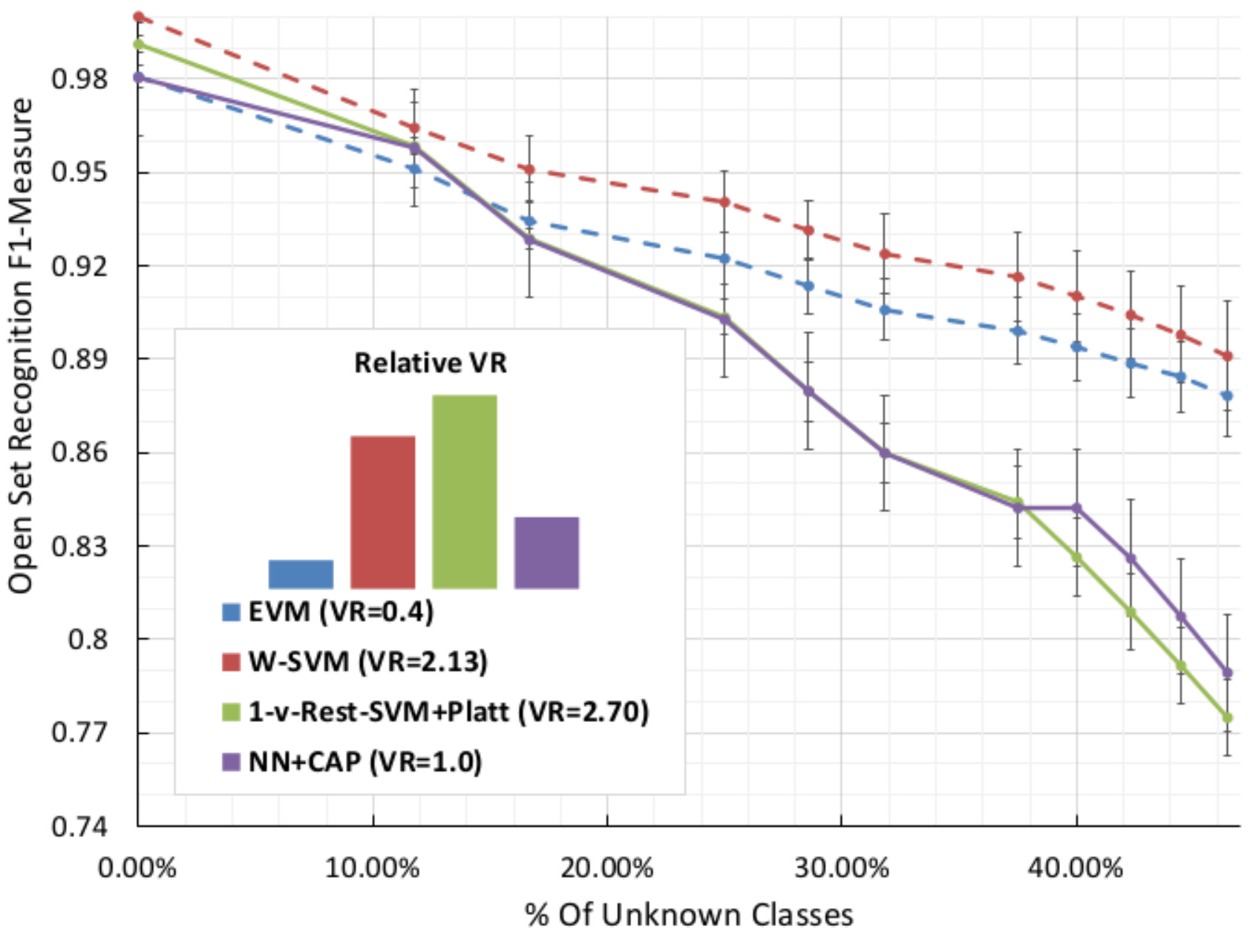}\vspace*{-4ex}
\end{center}
\caption{\small \textit{Multi-class open set recognition} performance on OLETTER. The $x$-axis represents the percentage of classes in the test set that were unseen during training. Error bars show standard deviation. The EVM is comparable to the existing state-of-the-art W-SVM~\cite{walter2014} in F1-Measure, but at a substantial savings in training efficiency and scalability, as reflected in the vector ratio (VR). The EVM's VR \textit{is an order of magnitude smaller} than the two SVM-based models.
Both EVM and W-SVM algorithms have favorable performance degradation characteristics as the problem becomes more open. The two other  probabilistically calibrated algorithms degrade far more rapidly. Hyperparameters of $\tau=75$ (tail size) and $k=4$ (number of EVs to average over) were used in this evaluation, and were selected using the same training set cross validation technique as the W-SVM.}
\label{fig:openset-letter}
\vspace{-2em}
\end{figure}

\begin{figure*}[!t]\begin{center}\includegraphics[width=0.5\linewidth]{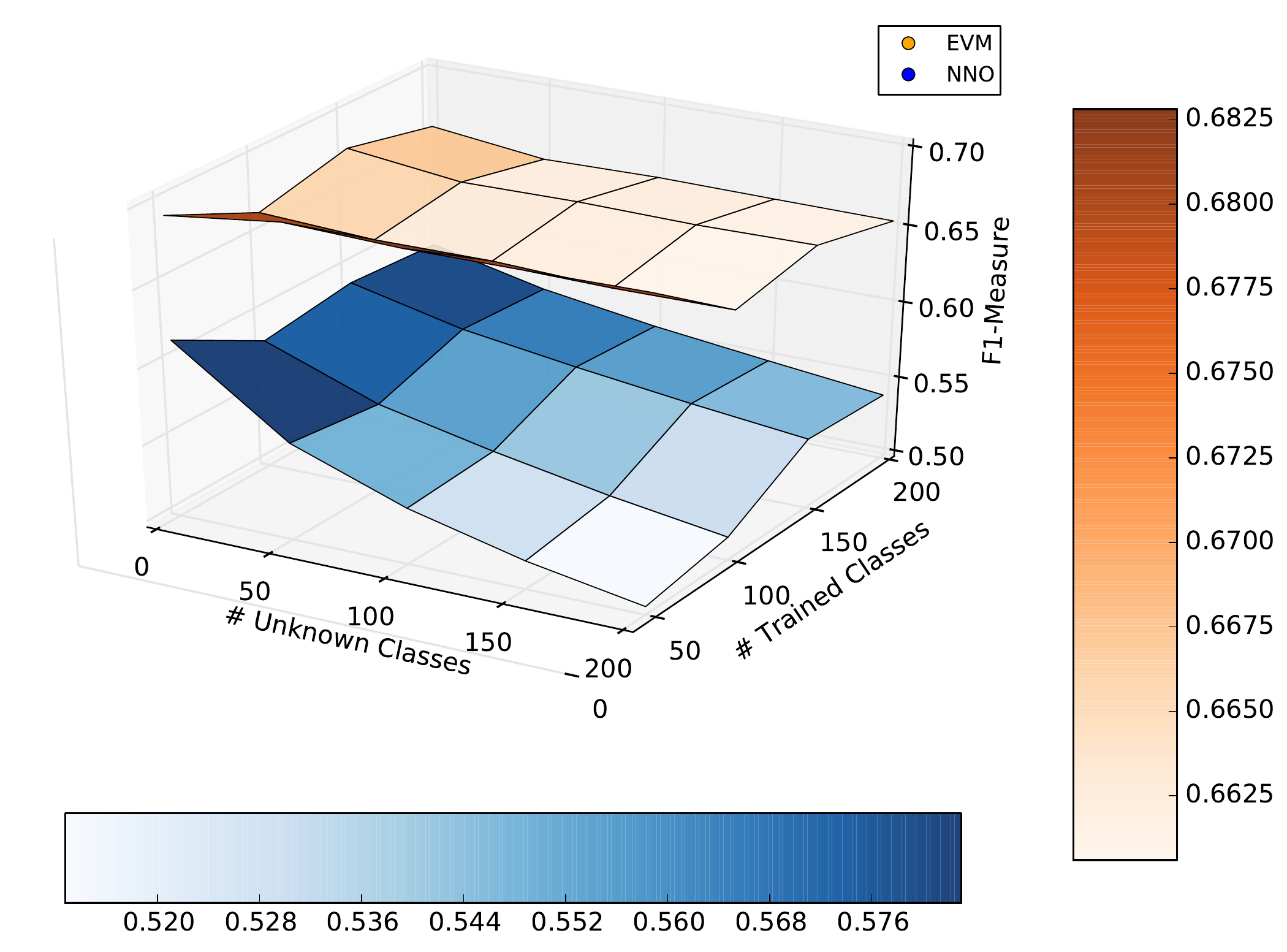}\includegraphics[width=0.5\linewidth]{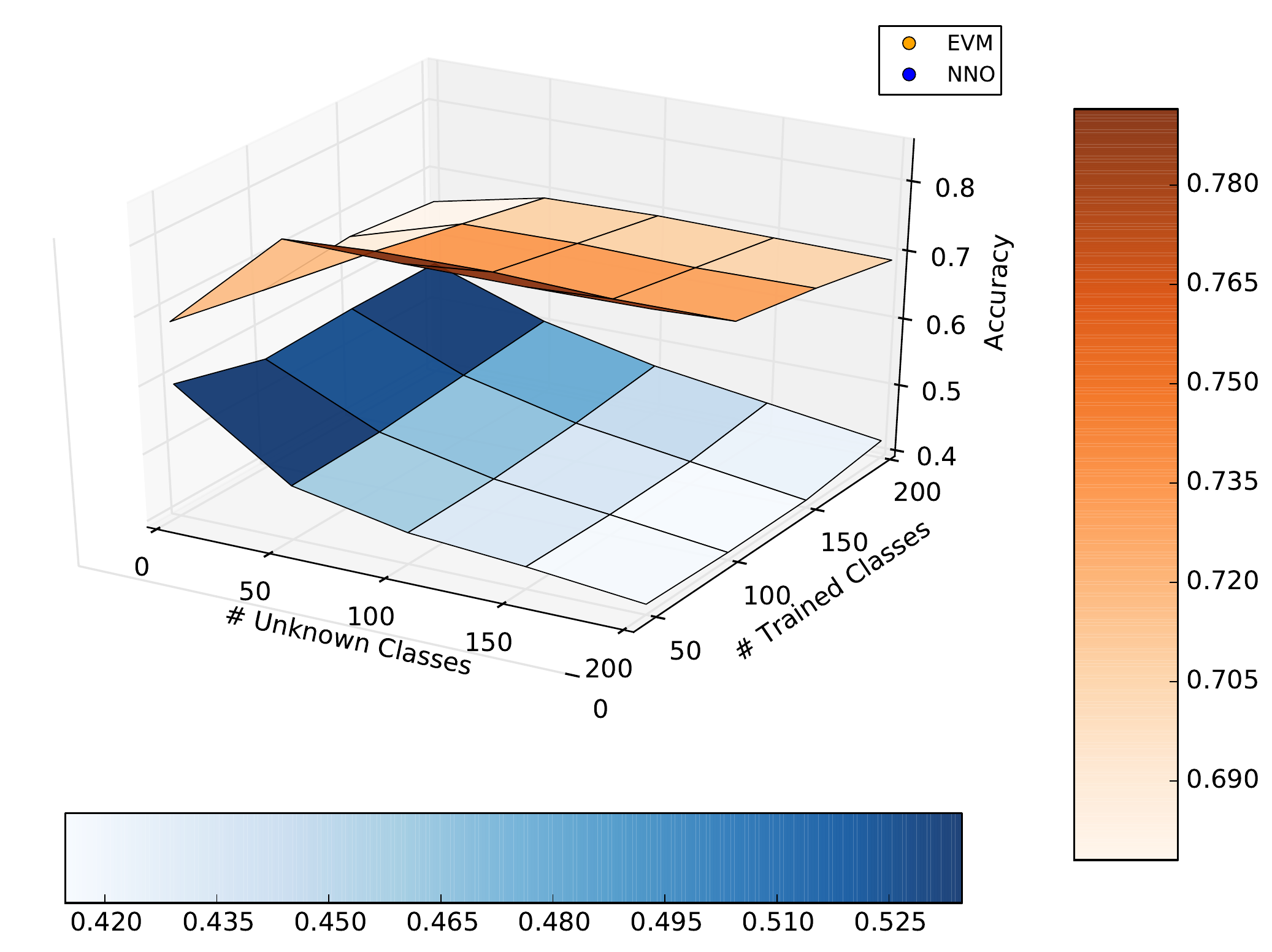}\end{center} \vspace*{-3ex}
  \caption{\small  Open world performance of the EVM and NNO algorithms on the open world ImageNet benchmark in terms of both F1-mearsure (left) and accuracy (right).
Initial training considering 50 classes was performed, then classes were incrementally added in groups of 50. 
The EVM dramatically outperforms NNO at all points evaluated and the divergence of the respective surfaces suggests superior scalability with respect to both the number of additional training classes and the number of additional unknown classes. Hyperparameter values of $k=6$ and $\tau=33998$ were selected using the cross validation procedure discussed in the text.}
\label{fig:openworld_imagenet}
\vspace{-1em}
\end{figure*}

To establish the ability of the the EVM to identify unknown classes, we first looked at the problem of multi-class open set recognition. Scheirer et al. developed the OLETTER protocol~\cite{walter2014} to evaluate classifiers in an open set recognition context.
It consists of randomly selecting 15 distinct labels from the Letter dataset~\cite{frey1991letter} as known classes during training and adding unknown classes by incrementally including subsets of the remaining 11 labels during testing. This process is repeated over 20 folds to calculate averages and error bars. For consistency with~\cite{walter2014}, we report results in terms of F1-Measure (over precision and recall), and dynamically set a threshold on open space
$\delta = \frac{1}{2}\left(1 - \sqrt{\left(2 \times C_T/(C_R+C_E)\right)}\right)$, where $C_R$ is the number of classes to be recognized (common to training and testing), $C_T$ is the number of classes used in training, and $C_E$ is the number of classes used in evaluation (testing).

For the OLETTER evaluation, we used Euclidean distance to compute the margins for our EVM.
We evaluate model compactness in terms of the \textit{vector ratio},  defined as $\hbox{VR} = \frac{\text{\# points retained by model}}{\text{ total \# training points}}$. The $\hbox{VR}$ is a scaled form of the support vector ratio introduced by Vapnik to provide an approximation of generalization error~\cite{vapnik1998statistical,valentini2004bias}. 
This allows us to compare the scalability of different nonlinear models.

Fig.~\ref{fig:openset-letter} shows results for all of the evaluated algorithms, including the open set-specific W-SVM~\cite{walter2014}, which is currently the best performing algorithm in the literature for this problem. Results for Nearest Neighbor (NN) classifiers with CAP probability estimation~\cite{walter2014}, and 1-vs-Rest RBF SVMs with Platt's probability estimator~\cite{Platt1999} are also shown. Other calibrated models assessed in~\cite{walter2014} performed significantly worse and are not shown. We selected RBF parameters for the SVMs via 5-fold cross validation on the training dataset, using a grid of $C=2^{-5},2^{-3},\ldots ,2^{15}$ and $\gamma=2^{-15},2^{-13},\ldots ,2^{3}$, consistent with~\cite{hsu2003practical}. 

The EVM performs comparably to the W-SVM, and outperforms all other algorithms.  The W-SVM is certainly a viable algorithm for this dataset, but its slight advantage comes at a greater cost than the EVM, requiring \textit{two} trained SVM models (per-class: one 1-class and one binary) for its operation.  The vector ratio for this experiment is computed for models trained on all 26 classes. For the evaluation in Fig.~\ref{fig:openset-letter}, the EVM's vector ratio is an order of magnitude smaller than that of any of the SVM models, indicating that for the chosen $\varsigma$ (0.5), fewer than half of the training data points were included as EVs. The number of support vectors in the SVM models is greater than the number of points in the training set. This is due to redundancy in support vectors kept in the multi-class regime. Although Platt-Calibrated SVM model processing and storage costs can be reduced by caching duplicated support vectors, the computational savings is less feasible for the W-SVM, since it uses different SVM models with multiple kernels. This is because different RBF kernels require different calculations even if they are centered on the same point. Also, for the EVM, and unlike the W-SVM, we can easily obtain a lower vector ratio while minimizing any degradation in accuracy. We analyze this tradeoff in Sec.~\ref{sec:discussion}. Finally, we would like to mention that, apart from the EVM, none of the classifiers whose performance is depicted in Fig.~\ref{fig:openset-letter} support incremental learning, so they cannot be applied to open world problems.

\subsection{Open World Recognition on ImageNet}
\label{sec:imagenet}

Open world recognition, the problem we are most concerned with, consists of three distinct phases~\cite{bendale2014towards}: one initial training phase, followed by alternating phases of open set recognition and updates to incorporate newly labeled data into the model.
We evaluated open world recognition performance on a benchmark protocol that uses an open world partition of ImageNet introduced in \cite{bendale2014towards}.
\comment{
\tb{Realy.. oooks liek same frequency +50, to me} One small change that we introduced was an increase in frequency of evaluation with respect to number of unknown classes added to the test set. 
We introduced this change to better observe the behavior of each classifier as the problem increased in openness.
}
The protocol consists of an initial training phase on 50 classes, performing classification with 0, 50, 100, 150, and 200 unknown classes in the test set.  
At each increment another group of 50 classes is added, and classification is again performed on test instances from known classes, with samples from 0, 50, 100, 150, and 200 additional unknown classes. 
One manner in which we depart from the protocol in~\cite{bendale2014towards} is that instead of using dense SIFT features, which are no longer state-of-the-art, we use a 4,096-dimensional deep feature space representation derived from the fc7 layer of AlexNet~\cite{krizhevsky2012imagenet}. This is a much better representation for today's visual recognition tasks.

We compared the EVM's computational performance with the state-of-the-art NNO algorithm~\cite{bendale2014towards} on this benchmark, as well as the incremental KDE classification algorithm employed by Haines and Xiang in \cite{haines2014active} that had heretofore not been tested on this benchmark, or any particularly large-scale benchmark, in terms of either number of samples or dimensionality. While both the EVM and the NNO algorithms completed in a matter of hours, the incremental KDE classifier had enrolled fewer than 40 samples after 24 hours. Assuming a constant rate of enrollment, it would take approximately 18 years for the benchmark training experiment to complete, leading us to conclude that the approach, at least as implemented, is not scalable, and therefore report recognition results for just the EVM and NNO algorithms.

For training and testing, we used the ILSVRC2014 training and validation partitions respectively. 
For consistency with~\cite{bendale2014towards}, we selected an NNO rejection threshold via 3-fold cross-class validation on the first 50 classes from the training set. 
This is also how we selected $\delta$ for the EVM, searching over a range of $[ 0.05, 0.1, \hdots, 0.3 ]$. Hyperparameters $k$ and $\tau$ were obtained via a Bayesian search using the hyperopt library~\cite{bergstra2013hyperopt}.
The Bayesian search conducted with hyperopt consisted of 50 iterations of three-fold cross-validation over the first 50 classes of the training set. 
The hyperparameter ranges consisted of 1-10 for $k$ and 100-32,000 for $\tau$.
Noting that 3-fold cross validation reduces the size of the training set by 1/3, and assuming a rough proportionality on these hyperparameters to training set size (and increment batch size), we multiplied the selected values by 1.5 and rounded to the nearest integer value to arrive at hyperparameter selections used for training. 

During our ImageNet experiments, we found that Euclidean distance led to poor performance when computing margins for the EVM. 
This is consistent with the previous finding that Euclidean distance does not work well when comparing deep features of individual samples~\cite{aggarwal2001surprising}.
We therefore turned to cosine similarity, which is a commonly used measure of divergence between samples in a deep feature space. 
NNO still performed reasonably well when using Euclidean distance; which we suspect is because it compares samples with respect to class means.
However, when cosine similarity was used, the classifier rejected nearly all test samples as unknown. 
Thus, we report the best results for each classifier, using cosine similarity for the EVM and Euclidean distance for NNO. 
These results are shown in Fig.~\ref{fig:openworld_imagenet}.

The EVM consistently and dramatically outperforms NNO, both in terms of accuracy and F1-Measure.
Readers might wonder why the EVM's accuracy increases as the number of classes are added; this is because the EVM is a good rejector and is able to tightly bound class hypotheses by their support. However, it does so while simultaneously maintaining reasonable classification performance. While NNO models the deviation from class means according to a support bound, such a rigid and over-simplified model of each class yields a poorer performance trend than the EVM delivers, hence the overall divergence of the surface plots in Fig.\ref{fig:openworld_imagenet}. Finally, we note that with the chosen value $\varsigma=0.5$, the EVM provides significant reductions in model size over using all points (cf. Table \ref{tab1}).

\begin{table}[t]
\begin{center}
\caption{Numbers of extreme vectors, cumulative numbers of training points used, and vector ratios after each batch is added.}
\label{tab1}
\begin{tabular}{ | l | r | r | r | r |}
\hline
Batch & 0-50 & 50-100 & 100-150 & 150-200 \\ \hline
\# EVs & 16309 & 35213 & 52489  & 74845 \\ \hline
\# Points & 64817 & 129395 & 194217 & 255224 \\ \hline
Vector Ratio & 0.25 & 0.27 & 0.27 & 0.29\\ \hline
\end{tabular}
\end{center}
\vspace{-2em}
\end{table}

\section{Practical Considerations for the EVM}
\label{sec:discussion}





The model reduction strategy that we employed in Sec.~\ref{sec:experiments} of selecting a threshold $\varsigma$ and running the Set Cover algorithm is a simple way to increase efficiency at classification time and achieve a compact representation of the training set.
While our experiments in Sec.~\ref{sec:experiments} used a probabilistically motivated choice of $\varsigma=0.5$, what constitutes a good redundancy threshold in practice is often tied to the computational, storage, or time budget of the problem at hand. The model must therefore be sized accordingly so that it does not exceed the maximum budget, while still maintaining as good performance as possible.
We refer to this as the \emph{budgeted optimization} problem in which the objective is to obtain the most representative model achievable that meets but does not exceed the budget. 
We can perform this selection via a binary search for $\varsigma$, for which the optimization, given a target budget, most closely returns the requested number of EVs. 
Since the greedy optimization selects EVs in order of their coverage, we can easily retain only the most important of these EVs. 
This allows EVM classifiers to be approximately portable across many device types of heterogeneous computational capabilities.

We performed an evaluation of this technique on the closed set Letter dataset, using all points in the training set ($\varsigma=1.0$), which yielded a base accuracy of 96\%. 
Using 50\% of the data ($\varsigma = 0.492$) or 40\% of the data ($\varsigma = 0.186$) yielded accuracies comparable  to using all points.
Reducing to 10\% of the training points ($\varsigma = 0.008$) yielded an accuracy of 92\%.
This suggests that budgeted optimization is quite effective for classifier compression/portability, and that $\varsigma$ can assume a very wide range with minimal impact on classification performance.

With respect to computational efficiency, much of the EVM's training procedure can be performed independently on a class-by-class basis, making the algorithm well suited for a cluster or GPU implementation. 
Each statistical fit, made via Maximum Likelihood Estimation (MLE), is fast and constant in time complexity, due to a cap on the maximum number of iterations. Model reduction is $O(N_l^2)$, where $N_l$ is the number of points in class $C_l$. The complexity of each tail retrieval, given $N$ training points, can be reduced from $O(NlogN)$ to $O(\tau logN)$ by introducing space partitioning data structures, \textit{e.g.}, $k$-$d$ trees~\cite{bentley1975multidimensional}. However, employing them may impose constraints on the types of measures used to compute the margin --- the common requirement that the distance function be a strict metric on the hypothesis space precludes quasi-metrics such as cosine similarity.


While the EVM may appear to be highly parameterized, {\bf the per point Weibull scale and shape parameters are purely data derived and are automatically learnt during training}. They are a function of the MLE optimization process, which we must distinguish from hyperparameters selected prior to fitting.
The only hyperparameters are $\tailsize$, $k$, and $\varsigma$. 
As we have previously discussed, barring hard model size constraints (\textit{e.g.}, in budgeted optimization), a wide range of cover probability $\varsigma$ values work well, including the probabilistically driven choice of 0.5 used in the experiments in Sec.~\ref{sec:experiments}.
With respect to the value $k$, the number of models to average,  we found that searching over $\{1,\hdots,10\}$ on a validation set yielded slightly better results on the test set (with further decreased performance for $k > 10$), although performance variations over this parameter range typically accounted for less than $2\,\%$ in accuracy or $F1$-measure respectively. 
Thus, while $\varsigma$ and $k$ might have a slight impact on performance, the vast majority of performance variation can be attributed to only a single hyperparameter, namely $\tailsize$.

Unfortunately, EVT provides no principled means of selecting the tail size $\tailsize$. The theory only dictates the family of distribution functions that will apply and proves convergence for an unspecified $\tailsize$. 
We use cross-class validation for  selecting $\tailsize$ that yields optimal cross-validation accuracy with missing classes.  For the selection of $\tailsize$ obtained on the OLETTER dataset through cross-class validation, the value $\tailsize=75$ accounts for only 0.5\% of the 15K OLETTER training set and that small a fraction represents plausible extrema.  However, 
the results of applying the cross-class validation methodology discussed in Sec.~\ref{sec:imagenet} for our ImageNet experiments might raise questions since the selected tail size ($\tau = 33998$) consists of approximately half the number of samples in the training set.  This result is surprising and counter-intuitive because one would not ordinarily think of ``half of the data'' as being extreme values.   But looking at a fraction of data is probably the wrong way to determine the boundary or extreme points.  While the EVM's \PSI-models are fit on 1D margins, the high dimensionality of the feature space (4,096) translates into many more directions yielding ``boundary points'' than for the relatively low-dimensional (16) OLETTER dataset.  Normalizing for feature dimensionality and the number of classes we find both examples are similar. Dividing the tail size used for the first batch of ImageNet by the dimensionality of the feature space and number of classes yields 0.17 points per dimension per class. Doing the same for OLETTER, we obtain 4.69 points per dimension and an average of 0.17 points per dimension per class.


\section{Conclusion}
\label{sec:conclusion}

Perhaps the most important conclusion of this work is that the EVM is able to do kernel-free nonlinear classification. Interestingly, the EVM shares some relationships with radial basis functions. When $\kappa=2$, the functional form of Eq.~\ref{eq:weibull_cdf} is the same as a Gaussian RBF, and when $\kappa=1$ it is the same as an exponential or Laplacian RBF. While these $\kappa$ values can occur in practice, $\kappa$ assumes a much broader range of values, which are generally larger. Furthermore for $\kappa>2$, Eq.~\ref{eq:weibull_cdf} is not a Mercer kernel.
Alternatively, if one approximates Eq.~\ref{eq:weibull_cdf} by a weighted sum of Gaussians (or Laplacians) we have two different ways of viewing a Gaussian (or Laplacian) RBF kernel as an approximation of a \PSI-model. While the \PSI-model parameters vary in scale and shape with the bandwidth and density of the data set, in a Gaussian approximation the number of kernel elements and/or the accuracy of approximation must vary spatially. The EVM requires the fewest points for the margin distribution and its \PSI-model. 
{\bf For the EVM, we do not make an {\em ad hoc} assumption of a kernel
 trick nor a {\em post hoc} assumption of a particular kernel function; the
 functional form of the \PSI-model is a direct result of EVT being used
 to model input space distance distributions.} 
 
 The Weibull fitting ensures that a
small number of mislabeled points or other outliers will not cause the
margin estimated from the Weibull to be at that location. If
the fitting includes more distant points, the \PSI-model will broaden in
scale / shape providing a naturally derived theory for the ``softness'' in its
margin definition. However, the overall optimization with Set Cover currently
lacks a parameter to adjust the risk tradeoff between positive and negative classes.
Future directions of research may include directly extending the EVM by obtaining a better parameterized soft-margin during Set Cover, perhaps by adding weights to balance soft-margin errors and formulating the problem in terms of linear programming. Another potential extension would be to incorporate 
margin weights in a loss function in an SVM-style optimization algorithm.

\vspace{-1em}
\section*{Acknowledgement}
This work was supported by the National Science Foundation, NSF grant number IIS-1320956: Open Vision -- Tools for Open Set Computer Vision and Learning.
\vspace{-1em}
\bibliographystyle{IEEEtran}
\bibliography{paper}

\begin{thebibliography}{10}
\providecommand{\url}[1]{#1}
\csname url@samestyle\endcsname
\providecommand{\newblock}{\relax}
\providecommand{\bibinfo}[2]{#2}
\providecommand{\BIBentrySTDinterwordspacing}{\spaceskip=0pt\relax}
\providecommand{\BIBentryALTinterwordstretchfactor}{4}
\providecommand{\BIBentryALTinterwordspacing}{\spaceskip=\fontdimen2\font plus
\BIBentryALTinterwordstretchfactor\fontdimen3\font minus
  \fontdimen4\font\relax}
\providecommand{\BIBforeignlanguage}[2]{{%
\expandafter\ifx\csname l@#1\endcsname\relax
\typeout{** WARNING: IEEEtran.bst: No hyphenation pattern has been}%
\typeout{** loaded for the language `#1'. Using the pattern for}%
\typeout{** the default language instead.}%
\else
\language=\csname l@#1\endcsname
\fi
#2}}
\providecommand{\BIBdecl}{\relax}
\BIBdecl

\bibitem{crammer2006online}
K.~Crammer, O.~Dekel, J.~Keshet, S.~Shalev-Shwartz, and Y.~Singer, ``Online
  passive-aggressive algorithms,'' \emph{JMLR}, vol.~7, pp. 551--585, 2006.

\bibitem{yeh2008dynamic}
T.~Yeh and T.~Darrell, ``Dynamic visual category learning,'' in \emph{IEEE
  CVPR}, 2008.

\bibitem{laskov2006incremental}
P.~Laskov, C.~Gehl, S.~Kr{\"u}ger, and K.-R. M{\"u}ller, ``Incremental support
  vector learning: Analysis, implementation and applications,'' \emph{JMLR},
  vol.~7, pp. 1909--1936, 2006.

\bibitem{li2010optimol}
L.-J. Li and L.~Fei-Fei, ``Optimol: automatic online picture collection via
  incremental model learning,'' \emph{IJCV}, vol.~88, no.~2, pp. 147--168,
  2010.

\bibitem{bendale2014towards}
A.~Bendale and T.~Boult, ``Towards open world recognition,'' in \emph{IEEE
  CVPR}, 2015.

\bibitem{kapoor2012memory}
A.~Kapoor, S.~Baker, S.~Basu, and E.~Horvitz, ``Memory constrained face
  recognition,'' in \emph{IEEE CVPR}, 2012.

\bibitem{mensink2012metric}
T.~Mensink, J.~Verbeek, F.~Perronnin, and G.~Csurka, ``Metric learning for
  large scale image classification: Generalizing to new classes at near-zero
  cost,'' in \emph{ECCV}, 2012.

\bibitem{ristin2014incremental}
M.~Ristin, M.~Guillaumin, J.~Gall, and L.~Gool, ``Incremental learning of ncm
  forests for large-scale image classification,'' in \emph{IEEE CVPR}, 2014.

\bibitem{Walter_openset}
W.~J. Scheirer, A.~Rocha, A.~Sapkota, and T.~E. Boult, ``Towards open set
  recognition,'' \emph{IEEE T-PAMI}, vol.~36, July 2013.

\bibitem{bishop2006pattern}
C.~M. Bishop, \emph{Pattern Recognition and Machine Learning}.\hskip 1em plus
  0.5em minus 0.4em\relax Springer Science+Business Media, LLC, 2006.

\bibitem{Lalit}
L.~P. Jain, W.~J. Scheirer, and T.~E. Boult, ``Multi-class open set recognition
  using probability of inclusion,'' in \emph{ECCV}, September 2014.

\bibitem{walter2014}
W.~Scheirer, L.~Jain, and T.~Boult, ``Probability models for open set
  recognition,'' \emph{IEEE T-PAMI}, vol.~36, pp. 2317--2324, 2014.

\bibitem{markou2003novelty}
M.~Markou and S.~Singh, ``Novelty detection: a review—part 1: statistical
  approaches,'' \emph{Signal Processing}, vol.~83, no.~12, pp. 2481--2497,
  2003.

\bibitem{hodge2004survey}
V.~J. Hodge and J.~Austin, ``A survey of outlier detection methodologies,''
  \emph{Artificial Intelligence Review}, vol.~22, no.~2, pp. 85--126, 2004.

\bibitem{cevikalp_2012}
H.~Cevikalp and B.~Triggs, ``Efficient object detection using cascades of
  nearest convex model classifiers,'' in \emph{IEEE CVPR}, 2012.

\bibitem{mensink2013distance}
T.~Mensink, J.~Verbeek, F.~Perronnin, and G.~Csurka, ``Distance-based image
  classification: Generalizing to new classes at near-zero cost,'' \emph{IEEE
  T-PAMI}, vol.~35, no.~11, pp. 2624--2637, 2013.

\bibitem{noveltydetection}
P.~Bodesheim, A.~Freytag, E.~Rodner, and J.~Denzler, ``Local novelty detection
  in multi-class recognition problems,'' in \emph{IEEE WACV}, 2015.

\bibitem{domainadaptation}
R.~Gopalan, R.~Li, V.~M. Patel, and R.~Chellappa, ``Domain adaptation for
  visual recognition,'' \emph{Foundations and Trends in Computer Graphics and
  Vision}, vol.~8, no.~4, pp. 285--378, 2015.

\bibitem{zero-shot}
C.~H. Lampert, H.~Nickisch, and S.~Harmeling, ``Attribute-based classification
  for zero-shot visual object categorization,'' \emph{IEEE T-PAMI}, vol.~36,
  no.~3, pp. 453--465, 2014.

\bibitem{pentina2015multi}
A.~Pentina and S.~Ben-David, ``Multi-task and lifelong learning of kernels,''
  in \emph{Int. Conf. on Algorithmic Learning Theory}, 2015.

\bibitem{royer2015classifier}
A.~Royer and C.~H. Lampert, ``Classifier adaptation at prediction time,'' in
  \emph{IEEE CVPR}, 2015.

\bibitem{haines2014active}
T.~Haines and T.~Xiang, ``Active rare class discovery and classification using
  dirichlet processes,'' \emph{IJCV}, vol. 106, no.~3, pp. 315--331, 2014.

\bibitem{walter2011}
W.~J. Scheirer, A.~Rocha, R.~J. Micheals, and T.~E. Boult, ``Meta-recognition:
  The theory and practice of recognition score analysis,'' \emph{IEEE T-PAMI},
  vol.~33, no.~8, pp. 1689--1695, 2011.

\bibitem{Walter_Wscore}
W.~J. Scheirer, A.~Rocha, R.~Michaels, and T.~E. Boult, ``Robust fusion:
  Extreme value theory for recognition score normalization,'' in \emph{ECCV},
  2010.

\bibitem{walter2012}
W.~J. Scheirer, N.~Kumar, P.~N. Belhumeur, and T.~E. Boult, ``Multi-attribute
  spaces: Calibration for attribute fusion and similarity search,'' in
  \emph{IEEE CVPR}, 2012.

\bibitem{gibert2015sequential}
X.~Gibert, V.~Patel, and R.~Chellappa, ``Sequential score adaptation with
  extreme value theory for robust railway track inspection,'' in \emph{IEEE
  ICCV Workshops}, 2015.

\bibitem{victor2013a}
V.~Fragoso and M.~Turk, ``{SWIGS}: A swift guided sampling method,'' in
  \emph{IEEE CVPR}, Portland, Oregon, June 2013.

\bibitem{victor2013}
V.~Fragoso, P.~Sen, S.~Rodriguez, and M.~Turk, ``{EVSAC}: Accelerating
  hypotheses generation by modeling matching scores with extreme value
  theory,'' in \emph{IEEE ICCV}, December 2013.

\bibitem{zhang2016sparse}
H.~Zhang and V.~Patel, ``Sparse representation-based open set recognition,''
  \emph{IEEE T-PAMI}, 2017.

\bibitem{shawe2000margin}
J.~Shawe-Taylor and N.~Cristianini, ``Margin distribution and soft margin,''
  \emph{Advances in Large Margin Classifiers}, pp. 349--358, 2000.

\bibitem{garg2003margin}
A.~Garg and D.~Roth, ``Margin distribution and learning algorithms,'' in
  \emph{ICML}, 2003.

\bibitem{reyzin2006boosting}
L.~Reyzin and R.~Schapire, ``How boosting the margin can also boost classifier
  complexity,'' in \emph{ICML}, 2006.

\bibitem{aiolli2008kernelo}
F.~Aiolli, G.~{Da San Martino}, and A.~Sperduti, ``A kernel method for the
  optimization of the margin distribution,'' in \emph{ICANN}, 2008.

\bibitem{pelckmans2008risk}
K.~Pelckmans, J.~Suykens, and B.~Moor, ``A risk minimization principle for a
  class of parzen estimators,'' in \emph{NIPS}, 2008.

\bibitem{fisher1928limiting}
R.~Fisher and L.~Tippett, ``Limiting forms of the frequency distribution of the
  largest or smallest member of a sample,'' in \emph{Mathematical Proc. of the
  Cambridge Philosophical Society}, vol.~24, no.~2.\hskip 1em plus 0.5em minus
  0.4em\relax Cambridge University Press, 1928, pp. 180--190.

\bibitem{Pickands_1975}
J.~Pickands, ``Statistical {I}nference {U}sing {E}xtreme {O}rder
  {S}tatistics,'' \emph{The Annals of Statistics}, vol.~3, no.~1, pp. 119--131,
  1975.

\bibitem{Kotz_2001}
S.~Kotz and S.~Nadarajah, \emph{Extreme Value Distributions: Theory and
  Applications}.\hskip 1em plus 0.5em minus 0.4em\relax World Sci. Pub. Co.,
  2001.

\bibitem{coles2001introduction}
S.~Coles, \emph{An Introduction to Statistical Modeling of Extreme
  Values}.\hskip 1em plus 0.5em minus 0.4em\relax Springer, 2001.

\bibitem{slavik1996tight}
P.~Slav{\'\i}k, ``A tight analysis of the greedy algorithm for set cover,'' in
  \emph{ACM symposium on theory of computing}, 1996.

\bibitem{mirzasoleiman2013distributed}
B.~Mirzasoleiman, A.~Karbasi, R.~Sarkar, and A.~Krause, ``Distributed
  submodular maximization: Identifying representative elements in massive
  data,'' in \emph{NIPS}, 2013.

\bibitem{frey1991letter}
P.~W. Frey and D.~J. Slate, ``Letter recognition using {H}olland-style adaptive
  classifiers,'' \emph{Machine Learning}, vol.~6, no.~2, pp. 161--182, 1991.

\bibitem{vapnik1998statistical}
V.~Vapnik, \emph{Statistical learning theory}.\hskip 1em plus 0.5em minus
  0.4em\relax Wiley New York, 1998, vol.~1.

\bibitem{valentini2004bias}
G.~Valentini and T.~G. Dietterich, ``Bias-variance analysis of support vector
  machines for the development of {SVM}-based ensemble methods,'' \emph{JMLR},
  vol.~5, pp. 725--775, 2004.

\bibitem{Platt1999}
J.~Platt, ``Probabilistic outputs for support vector machines and comparison to
  regularize likelihood methods,'' in \emph{Advances in Large Margin
  Classifiers}, A.~Smola, P.~Bartlett, B.~Schoelkopf, and D.~Schuurmans, Eds.,
  2000, pp. 61--74.

\bibitem{hsu2003practical}
C.-W. Hsu, C.-C. Chang, C.-J. Lin \emph{et~al.}, ``A practical guide to support
  vector classification,'' 2003.

\bibitem{krizhevsky2012imagenet}
A.~Krizhevsky, I.~Sutskever, and G.~E. Hinton, ``Imagenet classification with
  deep convolutional neural networks,'' in \emph{NIPS}, 2012.

\bibitem{bergstra2013hyperopt}
J.~Bergstra, D.~Yamins, and D.~D. Cox, ``Hyperopt: A python library for
  optimizing the hyperparameters of machine learning algorithms,'' in
  \emph{SciPy}, 2013.

\bibitem{aggarwal2001surprising}
C.~C. Aggarwal, A.~Hinneburg, and D.~A. Keim, ``On the surprising behavior of
  distance metrics in high dimensional space,'' in \emph{Int. Conf. on Database
  Theory}.\hskip 1em plus 0.5em minus 0.4em\relax Springer, 2001, pp. 420--434.

\bibitem{bentley1975multidimensional}
J.~L. Bentley, ``Multidimensional binary search trees used for associative
  searching,'' \emph{Comm. of the ACM}, vol.~18, no.~9, pp. 509--517, 1975.

\end{thebibliography}

\includepdf[pages=-,pagecommand={},width=\textwidth]{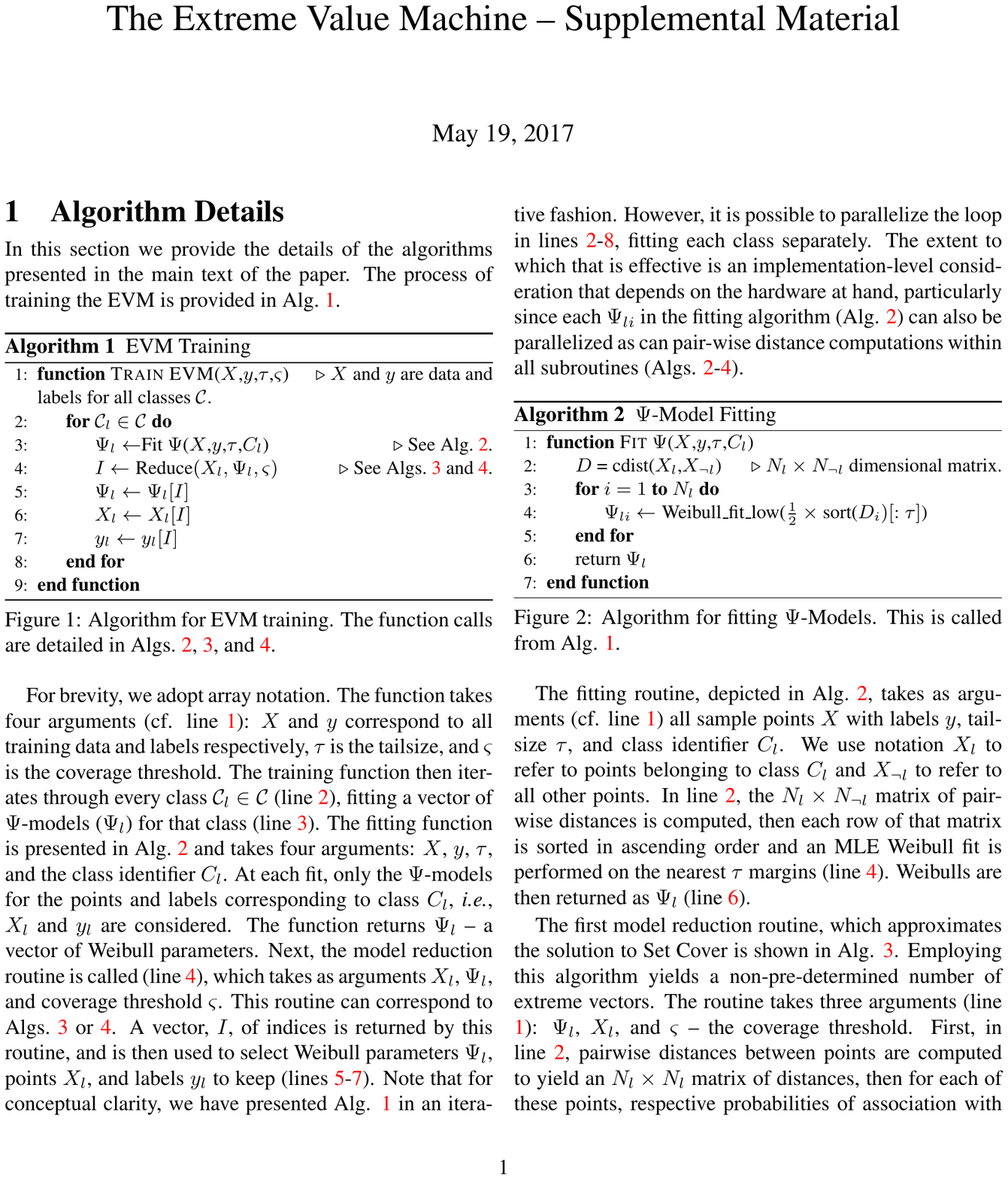}

\end{document}